\documentclass[a4paper,11pt]{article}
\usepackage[utf8]{inputenc}
\usepackage[english]{babel}
\usepackage{expl3}
\usepackage{amsmath,amssymb,amsfonts,amsthm,mathtools,bm}
\usepackage{siunitx}
\usepackage{booktabs}
\usepackage{microtype}
\usepackage{xcolor}
\usepackage{graphicx}
\usepackage{tikz}
\usetikzlibrary{arrows.meta,positioning}
\definecolor{cobalt}{RGB}{0,71,171}
\usepackage[colorlinks,urlcolor=cobalt,citecolor=cobalt,linkcolor=cobalt,
  pdfauthor={Sebastian Schaffer and Lukas Exl}]{hyperref}
\usepackage[nameinlink,noabbrev]{cleveref}
\usepackage{enumitem}

\usepackage{fullpage}

\usepackage{authblk}

\usepackage{url}

\newtheorem{proposition}{Proposition}
\newtheorem{remark}{Remark}
\newcommand{\R}{\mathbb{R}}
\newcommand{\m}{\bm m}
\newcommand{\z}{\bm z}
\newcommand{\p}{\bm p}
\newcommand{\s}{\bm s}
\newcommand{\Jop}{\bm J}
\newcommand{\Rop}{\bm R}
\newcommand{\Kop}{\bm K}
\newcommand{\Eenc}{E_{\theta}}
\newcommand{\Ddec}{D_{\xi}}
\newcommand{\Genergy}{g_{\eta}}
\newcommand{\zhat}{\widehat{\bm z}}
\DeclarePairedDelimiter{\norm}{\lVert}{\rVert}

\begin{document}
\date{}
\title{An Energy-Based Conservative--Dissipative Latent Neural Evolution Operator for Magnetization Dynamics}

\author[a,b]{Sebastian Schaffer \thanks{\texttt{sebastian.schaffer@univie.ac.at}}}
\author[a,b]{Lukas Exl \thanks{\texttt{lukas.exl@univie.ac.at}}}

\affil[a]{Math. AI/ML, Wolfgang Pauli Institute, Vienna, Austria}
\affil[b]{Department of Mathematics, University of Vienna, Vienna, Austria}

\setcounter{Maxaffil}{0}
\renewcommand\Affilfont{\itshape\small}

\maketitle
\textbf{Abstract.} 
We develop an energy-based reduced-order model for micromagnetic magnetization dynamics that couples a convolutional autoencoder to a structured latent neural ordinary differential equation. Motivated by the precessional–dissipative structure of the Landau–Lifshitz–Gilbert equation, the latent vector field is generated from the gradient of a learned scalar potential through an antisymmetric operator and a symmetric positive-semidefinite dissipative operator. This potential is learned in nonunique latent coordinates and is not identified with the Gibbs free energy, but decreases monotonically along autonomous continuous-time solutions, while the antisymmetric component permits motion along its level sets. The encoder, decoder, latent energy, and operators are trained jointly on short trajectory windows using latent and decoded-rollout losses alone, without time-derivative supervision, physical-energy labels, or dissipation penalties. At inference, an initial state is encoded once, evolved in latent space, and decoded only at the requested output times, enabling substantially cheaper trajectory prediction than the micromagnetic solver used to generate the training data. We compare quadratic, deep, and additive deep–quadratic latent energies on two datasets parameterized by field amplitude and generated for the two applied-field directions of the NIST $\mu$MAG Standard Problem 4. Dissipative-only and antisymmetric–dissipative models achieve comparable accuracy on short training-style windows but differ substantially on uninterrupted rollouts, for which the antisymmetric–dissipative models provide markedly more accurate trajectory predictions. The deep–quadratic energy gives the best overall accuracy for both field directions and exhibits slower error growth when rollouts are extended to twice the training horizon.

\textbf{Keywords.} micromagnetics, energy-based model, reduced-order model, neural ordinary differential equation, latent dynamic

\section{Introduction}
Computational micromagnetics is widely used for the analysis and design of magnetic devices, including permanent magnets and integrated magnetic sensors \cite{brown1963micromagnetics,suess2018topologically,bashir2012head,fischbacher2018micromagnetics,kovacs2020computational}.
Their magnetization dynamics are commonly described by variants of the the nonlinear Landau--Lifshitz--Gilbert (LLG) equation \cite{gilbert2004phenomenological} or the Landau--Lifshitz equation
\begin{equation}
 \partial_t\m=-c_\gamma[\m\times\bm H_{\mathrm{eff}}+\alpha\,\m\times(\m\times\bm H_{\mathrm{eff}})],
 \label{eq:llg}
\end{equation}
where $\m$ is the solution trajectory, $c_\gamma>0$, $\alpha\geq0$, and $\bm H_{\mathrm{eff}}$ is the effective field. The equation combines precession and damping while evolving a spatial vector field subject to a pointwise length constraint.

Due to the stiffness of the problem, the repeated numerical solution can become a major computational bottleneck in parameter studies, optimization, and inverse design. 
This is particularly relevant when device geometries, material parameters, or excitation fields must be varied over large design spaces. 
Fast nonlinear machine-learning surrogate models, therefore, offer a promising alternative to conventional time integration by approximating the solution operator and potentially enabling substantially cheaper repeated evaluations during inverse design and optimization.

Dynamical systems often evolve in spaces whose dimension is much larger than the number of coordinates that are required to describe a restricted family of solution trajectories. This is especially true for traditional numerical solvers. A spatial discretization of a field can require numerous degrees of freedom for an accurate numerical representation, even if the solution lies near a low-dimensional nonlinear manifold. Linear trial spaces are very efficient if the states are well represented but can become inefficient for strongly nonlinear solution sets. Nonlinear manifold models based on autoencoders provide a more flexible alternative \cite{lee2020model,champion2019data}. An encoder assigns reduced coordinates, and a decoder reconstructs the observation. The remaining question is how those coordinates should evolve.

Earlier work has demonstrated that machine-learning can be applied to establish surrogate models for the prediction of micromagnetic trajectories \cite{exl2021prediction, schaffer2021machine, kovacs2019learning}. In this work we will present another promising reduced-order model, which is motivated by energy-based models (EBMs) \cite{du2019implicit}. The effective field $\bm H_{\mathrm{eff}}$ is given by the variational derivative of the Gibbs free energy $G$,
\begin{equation}
    \bm H_{\mathrm{eff}}[\m](\bm x)
    = -\frac{1}{\mu_0 M_s}
    \frac{\delta G}{\delta \m}[\m](\bm x),
    \label{eq:effective-field}
\end{equation}
where $M_s$ is the saturation magnetization and $\mu_0$ the vacuum permeability.
Hence, the dynamics of \eqref{eq:llg} is driven by the scalar funtional $G$. This poses the question whether this property can be useful for machine learning purposes.

Likelihood-based generative EBMs assign low energy to observed configurations and estimate likelihood gradients by contrasting data (positive-phase) expectations with model (negative-phase) expectations. The negative phase is often approximated with Markov-chain Monte Carlo (MCMC), including contrastive divergence \cite{hinton2002training}, persistent chains \cite{tieleman2008training}, or chains initialized from a replay buffer \cite{du2019implicit}. This is not a clustering objective, nor does every EBM use this likelihood-based setup. 
In contrast, the present method performs no density estimation, partition-function evaluation, negative-sample generation, or MCMC. 

A first idea was to implicitly learn a neural network model $\mathcal N(\m, \p)\approx G$, where $\p$ denotes conditional parameters, from snapshots of the solution trajectories by automatic differentiation (AD) and explicit time integration. While this is probably possible, it comes with some issues. First, the feature space remains high-dimensional, and the scalar gradient must be evaluated at every solver stage. Second, outer differentiation of a trajectory loss still requires mixed second derivatives. Third, and most important, such an approach would just inherit the stiffness of the original problem, leading to a high computational demand. 

Another, much more promising approach is the combination of nonlinear manifold models with a neural ordinary differential equation (neural ODE). A neural ODE parameterizes a continuous-time vector field and can learn it from trajectories \cite{chen2018neural,rubanova2019latent}. 
Their continuous formulation permits evaluation at requested times.
However, an unrestricted vector-valued network does not distinguish circulation from dissipation and does not supply a scalar that is constrained to decrease. This distinction can matter for uninterrupted rollouts: agreement on short windows does not by itself prevent accumulated error from carrying a trajectory into poorly represented regions.

We generate the latent vector field from the gradient of a learned scalar. An antisymmetric channel operator produces a component tangent to scalar level sets, while a symmetric positive-semidefinite channel operator produces a decreasing component. This construction is related algebraically to Hamiltonian neural networks, gradient systems, and metriplectic formulations \cite{greydanus2019hamiltonian,morrison1986paradigm,hernandez2023port}. It is deliberately less restricted than a thermodynamic model: its scalar is learned in nonunique latent coordinates and is not inferred from physical-energy labels.
Even though this learned scalar energy is not the Gibbs energy, it can still be very useful by imposing a strong inductive bias, guaranteeing latent energy dissipation, and shaping the latent space. Further, such a scheme separates high-dimensional reconstruction from temporal integration. 

The contributions are: (i) a spatial latent-tensor encoder--decoder coupled to sitewise conservative--dissipative dynamics; (ii) an autonomous continuous-time dissipation identity for the learned scalar; (iii) a joint snapshot-window objective containing latent and decoded-rollout losses only; and (iv) a joint-training comparison of dissipative-only and antisymmetric--dissipative fields with quadratic, deep, and deep--quadratic latent energy models.

\section{Problem formulation}
\subsection{Observed dynamics and magnetization states}
Let $\mathcal X\subseteq\R^n$ be the observation space, let $\m:[0,T]\to\mathcal X$ be a trajectory, and let $\p\in\mathcal P\subseteq\R^{n_p}$ collect parameters fixed along that trajectory. We suppose that the trajectory family is approximately represented by a smooth manifold $\mathcal M\subseteq\mathcal X$ of intrinsic dimension smaller than $n$. The aim is to learn a reduced flow for that family, not to reconstruct a full governing equation throughout $\mathcal X$. Available data are snapshot sequences
\begin{equation}
 \mathcal T^{(q)}=\{(t_i^{(q)},\m_i^{(q)},\p^{(q)})\}_{i=0}^{k_q-1},\qquad
 \m_i^{(q)}=\m^{(q)}(t_i^{(q)}),
 \label{eq:data}
\end{equation}
where $q$ indexes trajectories, $t_0^{(q)}<\cdots<t_{k_q-1}^{(q)}$, and neither derivatives nor scalar-energy labels are assumed.

For a finite-difference magnetization with $N$ cells,
\begin{equation}
 \m=(\bm \rho_1,\ldots,\bm \rho_N)\in\R^{N\times3},\qquad \norm{\bm \rho_j}_2=1,
 \label{eq:mag-constraint}
\end{equation}
so the physical state space is a product of spheres.

\subsection{Nonlocal interactions and representational scope}
Computation of the nonlocal demagnetization, or stray field, is often the most difficult and computationally demanding computation for micromagnetic simulations.
In continuum notation it satisfies
\begin{equation}
 \nabla\times\bm H_{\mathrm d}=\bm0,\qquad
 \nabla\cdot(\bm H_{\mathrm d}+M_s\m\chi_\Omega)=0\quad\text{in }\R^3,
 \label{eq:magnetostatics}
\end{equation}
where $\Omega$ is the magnetic body and $\chi_\Omega$ its indicator function. Hence, $\bm H_{\mathrm d}[\m](\bm x)$ depends on magnetization throughout the sample. Its energy is
\begin{equation}
 G_{\mathrm d}(\m)=-\frac{\mu_0M_s}{2}\int_\Omega \m(\bm x)\cdot\bm H_{\mathrm d}[\m](\bm x)\,\mathrm d\bm x.
 \label{eq:demag-energy}
\end{equation}
Traditional numerical solvers need a way to compute $\bm H_{\mathrm d}[\m]$ at every step. Especially for small step sizes in explicit solvers this can quickly become an issue.
As we will see, our latent model does not evaluate this field at each ODE step but rather learns an internal approximation to the local and nonlocal interactions from data and the effects of nonlocal interactions present in the data may be represented implicitly.

\subsection{Energy decay of the reference dynamics}\label{sec:gibbs-decay}

The structure imposed in \Cref{sec:latent-dynamics} is motivated by a
property of the observed dynamics themselves. For a constant applied field, the
total Gibbs free energy is non-increasing along solutions of equation~(\ref{eq:llg}).

\begin{proposition}[Gibbs energy decay]\label{prop:gibbs}
Let $\bm{m}$ solve \eqref{eq:llg} with $\|\bm{m}\|_2 = 1$ pointwise,
$c_\gamma > 0$ and $\alpha \ge 0$, and let $G$ be the Gibbs free energy with
$\bm{H}_{\mathrm{eff}}$ given by \eqref{eq:effective-field}. If the applied field
is constant in time, then
\begin{equation}\label{eq:gibbs-decay}
  \frac{\mathrm{d}}{\mathrm{d}t} G[\bm{m}(t)]
  = -\mu_0 M_s c_\gamma \alpha
    \int_\Omega \Bigl( \|\bm{H}_{\mathrm{eff}}\|_2^2
      - (\bm{m}\cdot\bm{H}_{\mathrm{eff}})^2 \Bigr)\,\mathrm{d}x
  \;\le\; 0 .
\end{equation}
\end{proposition}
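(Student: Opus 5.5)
The plan is a direct chain-rule computation followed by pointwise vector identities. Since the applied field is constant in time, $G$ has no explicit time dependence and depends on $t$ only through $\bm m(t)$. Assuming enough regularity to differentiate under the integral, we can therefore write
\begin{equation*}
  \frac{\mathrm d}{\mathrm dt}G[\bm m(t)]
  = \int_\Omega \frac{\delta G}{\delta \bm m}[\bm m]\cdot \partial_t\bm m\,\mathrm dx
  = -\mu_0 M_s \int_\Omega \bm H_{\mathrm{eff}}\cdot\partial_t\bm m\,\mathrm dx .
\end{equation*}
The second equality uses \eqref{eq:effective-field}. For the exchange term we would include a remark that boundary contributions from integration by parts vanish under the usual homogeneous Neumann condition. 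This is implicit in defining $\delta G/\delta\bm m$ as the $L^2$ gradient.

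Next we substitute \eqref{eq:llg} and examine the integrand pointwise. The precession term contributes $\bm H_{\mathrm{eff}}\cdot(\bm m\times\bm H_{\mathrm{eff}})=0$, because a scalar triple product with a repeated vector vanishes. For the damping term we use the BAC--CAB identity
\begin{equation*}
  \bm m\times(\bm m\times\bm H_{\mathrm{eff}})=(\bm m\cdot\bm H_{\mathrm{eff}})\bm m-\|\bm m\|_2^2\,\bm H_{\mathrm{eff}} .
\end{equation*}
With $\|\bm m\|_2=1$ this gives
\begin{equation*}
  \bm H_{\mathrm{eff}}\cdot\bigl(\bm m\times(\bm m\times\bm H_{\mathrm{eff}})\bigr)=(\bm m\cdot\bm H_{\mathrm{eff}})^2-\|\bm H_{\mathrm{eff}}\|_2^2 .
\end{equation*}

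Collecting signs, we get $-\mu_0M_s\cdot(-c_\gamma\alpha)\cdot\bigl((\bm m\cdot\bm H_{\mathrm{eff}})^2-\|\bm H_{\mathrm{eff}}\|_2^2\bigr)$, which is exactly the integrand in \eqref{eq:gibbs-decay}. Nonpositivity then follows from the Cauchy--Schwarz inequality $(\bm m\cdot\bm H_{\mathrm{eff}})^2\le\|\bm m\|_2^2\|\bm H_{\mathrm{eff}}\|_2^2=\|\bm H_{\mathrm{eff}}\|_2^2$ together with $\mu_0,M_s,c_\gamma>0$ and $\alpha\ge0$. Equivalently, the integrand equals $\|\bm m\times\bm H_{\mathrm{eff}}\|_2^2\ge0$.

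The main obstacle is the first step: justifying the chain rule for the functional $G$. This requires that $\delta G/\delta\bm m$ be the $L^2(\Omega)$ gradient, which in turn needs the Neumann boundary condition for the exchange term. For the nonlocal demagnetizing term \eqref{eq:demag-energy}, it also needs the symmetry (self-adjointness) of the map $\bm m\mapsto\bm H_{\mathrm d}[\bm m]$, so that the derivative of the quadratic form is $-\mu_0M_s\bm H_{\mathrm d}$ without an extra factor. The time-independence of the Zeeman term is what gives the constant-field hypothesis its role. At the level of this paper I would state these as standing regularity assumptions rather than prove them.
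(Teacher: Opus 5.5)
Your proposal is correct and follows the paper's proof essentially step for step. Both use the chain rule with \eqref{eq:effective-field}, the vanishing of the precessional term, the triple-product identity with $\|\bm m\|_2=1$, and Cauchy--Schwarz for the sign; your extra remarks on the Neumann boundary condition, self-adjointness of the demagnetizing operator, and the identity $\|\bm H_{\mathrm{eff}}\|_2^2-(\bm m\cdot\bm H_{\mathrm{eff}})^2=\|\bm m\times\bm H_{\mathrm{eff}}\|_2^2$ are sound refinements that the paper leaves implicit.
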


\begin{proof}
By \eqref{eq:effective-field} and the chain rule,
$\frac{\mathrm{d}}{\mathrm{d}t} G
 = \int_\Omega \frac{\delta G}{\delta \bm{m}} \cdot \partial_t \bm{m} \,\mathrm{d}x
 = -\mu_0 M_s \int_\Omega \bm{H}_{\mathrm{eff}} \cdot \partial_t \bm{m}\,\mathrm{d}x$.
Inserting \ref{eq:llg}, the precessional contribution vanishes because
$\bm{H}_{\mathrm{eff}} \cdot (\bm{m}\times\bm{H}_{\mathrm{eff}}) = 0$. For the
damping contribution, the vector triple product and $\|\bm{m}\|_2 = 1$ give
$\bm{H}_{\mathrm{eff}} \cdot \bigl(\bm{m}\times(\bm{m}\times\bm{H}_{\mathrm{eff}})\bigr)
 = (\bm{m}\cdot\bm{H}_{\mathrm{eff}})^2 - \|\bm{H}_{\mathrm{eff}}\|_2^2$,
which yields \eqref{eq:gibbs-decay}. The integrand is nonnegative by the
Cauchy--Schwarz inequality, and vanishes precisely where $\bm{H}_{\mathrm{eff}}$
is parallel to $\bm{m}$, which is Brown's
equilibrium condition..
\end{proof}

The dissipation rate is proportional to $\alpha$ and vanishes only at equilibrium.
Only the total energy is monotone: individual contributions need not be, since the
exchange energy typically increases while a reversal structure forms. The latent
model developed below imposes an analogous one-scalar decrease property in learned
coordinates, under the same restriction to a constant conditional parameter.

\section{Model architecture}
The full proposed reduced-order model consists of three main maps:
\begin{align}
    \Eenc:\mathcal{X} \rightarrow \mathcal{Z},\\
    F_{\eta,\omega}:[0, T] \times \mathcal{Z} \times \mathcal{P} \rightarrow \mathcal{Z},\\
    \Ddec: \mathcal{Z} \rightarrow \mathcal{X},
\end{align}
where $\mathcal{Z} = \mathbb{R}^d$ is the latent space, $(\Eenc, \Ddec)$ represents an autoencoder and $F_{\eta,\omega}$ is the latent vector field model. The encoder parameters are denoted by $\theta$, the decoder parameters by $\xi$, the energy model parameter by $\eta$ and the operator parameters collectively by $\omega$. The complete model parameter set is 
\begin{equation}
    \Theta = (\theta, \xi, \eta, \omega).
\end{equation}
Ideally, $\Ddec\circ\Eenc$ approximates the identity on $\mathcal M$.

The inference path is shown in \Cref{fig:architecture}. An initial observation
$\m(t_0)$ is encoded as $\z_0$. The latent ODE evolves $\z_0$ to any requested
time, and the decoder maps the latent solution back to the observation space.
The physical parameter $\p$ conditions the scalar potential and hence the
latent vector field.

\begin{figure}[ht]
\centering
\resizebox{\textwidth}{!}{\begin{tikzpicture}[node distance=7mm and 8mm,>=Latex,
 block/.style={draw,rounded corners=2pt,align=center,minimum height=12mm,minimum width=24mm,fill=blue!4},
 state/.style={draw,align=center,minimum height=12mm,minimum width=25mm,fill=gray!8},
 ode/.style={draw,rounded corners=2pt,align=center,minimum height=25mm,minimum width=47mm,fill=orange!9},
 parameter/.style={draw,rounded corners=8pt,align=center,fill=green!8},arrow/.style={->,thick}]
\node[state] (m0) {$\m(t_0)$\\observed spatial state};
\node[block,right=of m0] (enc) {$\Eenc$\\encoder};
\node[state,right=of enc] (z0) {$\z_0$\\latent space \\representation};
\node[ode,right=of z0] (ode) {conditioned latent \\energy-gradient ODE\\[1mm]
$\dot\z=F_{\eta,\omega}(\z,\p)$};
\node[state,right=of ode] (zt) {$\zhat(t)$\\latent space \\trajectory};
\node[block,right=of zt] (dec) {$\Ddec$\\decoder};
\node[state,right=of dec] (mt) {$\widehat\m(t)$\\decoded output};
\node[parameter,below=9mm of ode] (p) {$\p$\\physical parameters};
\draw[arrow] (m0)--(enc); \draw[arrow] (enc)--(z0); \draw[arrow] (z0)--(ode);
\draw[arrow] (ode)--(zt); \draw[arrow] (zt)--(dec); \draw[arrow] (dec)--(mt); \draw[arrow] (p)--(ode);
\end{tikzpicture}}
\caption{Overall architecture. The encoder is evaluated at the initial state, the ODE advances the latent space representation, and the decoder supplies requested outputs. Physical parameters condition the latent energy-gradient ODE.}
\label{fig:architecture}
\end{figure}

\section{Latent energy-based dynamics}\label{sec:latent-dynamics}
\subsection{Scalar potential and gradient force}
Let the latent energy model $\Genergy:\mathcal Z\times\mathcal P\to\R$ be differentiable. Its first argument is the latent
state $\z\in\R^d$ and its second argument is the trajectory parameter
$\p\in\R^{n_p}$. We define the latent force
\begin{equation}
  \s(\z,\p)
  :=\nabla_{\z}\Genergy(\z,\p)
  \in\R^d.
  \label{eq:latent_force}
\end{equation}
The gradient is taken only with respect to $\z$ while $\p$ is fixed.

Nonlinearity in $\z$ is important for general trajectories. If $g(\z)=a^{\mathsf T}\z+b$, its gradient is constant and an autonomous model has constant velocity. If $g(\z)=\tfrac12\z^{\mathsf T}Q\z+a^{\mathsf T}\z+b$ with symmetric $Q$, its gradient is affine and the resulting vector field is affine. A deep scalar model permits both gradient and Hessian to vary with state. In \Cref{sec:results} we show that a combination of quadratic and deep scalar model is most effective.

\subsection{Conservative--dissipative split}
Let $\Jop,\Rop\in\R^{d\times d}$ satisfy
\begin{equation}
 \Jop^{\mathsf T}=-\Jop,\qquad \Rop^{\mathsf T}=\Rop,\qquad \Rop\succeq0.
 \label{eq:operator-conditions}
\end{equation}
The latent ODE in \Cref{fig:architecture} is
\begin{equation}
  \boxed{
  \dot\z(t)
  =F_{\eta,\omega}(\z(t),\p)
  =(\Jop-\Rop)\nabla_{\z}g_\eta(\z(t),\p).
  }
  \label{eq:latent-ode}
\end{equation}
Here $\dot\z=\mathrm d\z/\mathrm dt$. The operator $\Jop$ generates motion
that does not change $g_\eta$, whereas $-\Rop$ generates motion down its
gradient. The two terms use the same scalar potential but play distinct
dynamical roles.

The operators $\Jop, \Rop$ are constant with respect to $\z$ and $t$ in the present model.
They are nevertheless learned parameters. Their constancy is a deliberate
restriction and allow the latent dynamics to evolve in a fixed linear subspace of $\mathcal{Z}$ during inference. 
Nonlinear state dependence is assigned to the potential, while
the operators encode a simple global latent motion. 

\begin{proposition}[Autonomous latent-energy dissipation]
For fixed $\p$, a differentiable solution of \eqref{eq:latent-ode} satisfies
\begin{equation}
 \frac{\mathrm d}{\mathrm dt}\Genergy(\z(t),\p)=-\s^{\mathsf T}\Rop\s\leq0.
 \label{eq:dissipation}
\end{equation}
\end{proposition}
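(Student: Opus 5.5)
The plan is to mirror the proof of \Cref{prop:gibbs}: differentiate the learned scalar along the trajectory with the chain rule, insert the latent vector field \eqref{eq:latent-ode}, and split the resulting quadratic form into its antisymmetric and symmetric parts. Since $\p$ is held fixed along the trajectory, $t\mapsto \Genergy(\z(t),\p)$ is a composition of the differentiable map $\z\mapsto\Genergy(\z,\p)$ with the differentiable curve $\z(t)$. The chain rule therefore gives
\begin{equation*}
 \frac{\mathrm d}{\mathrm dt}\Genergy(\z(t),\p)=\nabla_{\z}\Genergy(\z(t),\p)^{\mathsf T}\dot\z(t)=\s^{\mathsf T}\dot\z(t),
\end{equation*}
with $\s=\s(\z(t),\p)$ as in \eqref{eq:latent_force}. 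No derivative with respect to $\p$ appears, and this is exactly where the autonomy assumption (constant $\p$) enters. It is the analogue of the constant applied field in \Cref{prop:gibbs}.

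Substituting \eqref{eq:latent-ode} gives $\s^{\mathsf T}(\Jop-\Rop)\s=\s^{\mathsf T}\Jop\s-\s^{\mathsf T}\Rop\s$. The conservative term vanishes: $\s^{\mathsf T}\Jop\s$ is a scalar, so it equals its own transpose $\s^{\mathsf T}\Jop^{\mathsf T}\s=-\s^{\mathsf T}\Jop\s$ by \eqref{eq:operator-conditions}, and hence it is zero. This plays the role of $\bm H_{\mathrm{eff}}\cdot(\m\times\bm H_{\mathrm{eff}})=0$ in the Gibbs case. What remains is $-\s^{\mathsf T}\Rop\s$, which is nonpositive because $\Rop\succeq0$. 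This establishes \eqref{eq:dissipation}.

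There is no genuine obstacle. The only points that need care are bookkeeping ones:
\begin{itemize}
\item the gradient is taken in $\z$ alone, so that the chain rule produces exactly $\s^{\mathsf T}\dot\z$;
\item $\Jop$ and $\Rop$ are constant in $\z$ and $t$, so no extra terms arise from them;
\item the inequality is pointwise in $t$ and requires no integrability conditions.
\end{itemize}
I would also note the equality case. Decrease is strict exactly when $\Rop\s\neq\bm 0$, since $\s^{\mathsf T}\Rop\s=\norm{\Rop^{1/2}\s}_2^2$ and $\Rop^{1/2}\s=\bm 0$ if and only if $\Rop\s=\bm 0$. In particular, if $\Rop\succ0$, the scalar is stationary only at critical points of $\Genergy(\cdot,\p)$.
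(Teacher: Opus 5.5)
Your proof is correct and follows the paper's argument: the chain rule gives $\s^{\mathsf T}\dot\z$, substituting \eqref{eq:latent-ode} yields $\s^{\mathsf T}\Jop\s-\s^{\mathsf T}\Rop\s$, the antisymmetric term vanishes because the scalar equals its own transpose, and $\Rop\succeq0$ gives the sign. Your remark on the equality case (the dissipation vanishes exactly when $\Rop\s=\bm 0$) is also correct and matches the paper's later observation that the dissipation rate vanishes when $\Rop\s=0$.
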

\begin{proof}
The chain rule and \eqref{eq:latent-ode} give
\begin{align}
 \frac{\mathrm d}{\mathrm dt}\Genergy(\z(t),\p)
 &=\s^{\mathsf T}\dot\z
 =(\s^{\mathsf T}\Jop\s-\s^{\mathsf T}\Rop\s).
\end{align}
Since $\Jop^{\mathsf T}=-\Jop$, a scalar $\s^{\mathsf T}\Jop\s$ equals its negative and vanishes. Positive semidefiniteness of $\Rop$ gives the result.
\end{proof}

The result is exact in continuous time and requires no dissipation penalty in
the training loss. It does not say that the observation-space trajectory
decreases a known physical energy. It says that the learned scalar
$g_\eta$ is a Lyapunov-like quantity for the learned autonomous latent flow.

If the parameter varies with time, $\p=\p(t)$, then
\begin{equation}
 \frac{\mathrm d}{\mathrm dt}\Genergy(\z(t),\p(t))=-\s^{\mathsf T}\Rop\s+\nabla_{\p}\Genergy(\z(t),\p(t))^{\mathsf T}\dot\p.
 \label{eq:nonautonomous-energy}
\end{equation}
The second term on the right hand side represents work associated with parameter variation. Therefore, monotonicity is guaranteed only if $\p$ is constant, or if this additional term is non-positive.

\begin{remark}
The autoencoder might or might not retain the spatial dimensions through global pooling. In our implementation we retained the spatial dimension. The  derivations also apply for general antisymmetric and semi-positive definite operators. For instance, if the spatial dimensions are kept and the latent operators are applied along the channel dimension. However, an important detail, is that if the spatial dimension is retained, one should also allow for energy transport within the spatial dimensions. Therefore, for the following deep energy model in \Cref{sec:results}, we use a convolutional layer with kernel width 3 as the first layer. It would also be possible to model the spatial energy transport with operators $\Jop$ and $\Rop$. However, imposing this property on the energy model is simpler, especially since we only used valid padding for all convolutional operations and doing both would be redundant.
\end{remark}

\subsection{Spectral properties of the operators}
The structural assumptions also constrain the spectra of the constant
operators. Although these spectra do not by themselves determine the nonlinear
dynamics, they are useful for verifying a trained model and further model analysis.

\begin{proposition}[Operator spectra]
Let $\Jop,\Rop\in\R^{d\times d}$ satisfy
$\Jop^{\mathsf T}=-\Jop$ and $\Rop\succeq0$, and define
$\Kop=\Jop-\Rop$. Then:
\begin{enumerate}[label=(\roman*)]
  \item every eigenvalue of $\Jop$ is purely imaginary or zero;
  \item every eigenvalue of $\Rop$ is real and nonnegative; and
  \item every eigenvalue $\lambda$ of $\Kop$ satisfies
        $\operatorname{Re}(\lambda)\leq0$.
\end{enumerate}
\label{prop:operator_spectra}
\end{proposition}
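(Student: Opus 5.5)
The plan is a single Rayleigh-quotient argument in $\mathbb{C}^d$, applied to each of the three matrices. Let $\bm v\in\mathbb{C}^d\setminus\{0\}$ be an eigenvector of a real matrix $\bm A\in\R^{d\times d}$ with eigenvalue $\lambda$. Normalize $\norm{\bm v}_2=1$ and write $\bm v^{*}$ for the conjugate transpose. Then
\begin{equation*}
\lambda=\bm v^{*}\bm A\bm v .
\end{equation*}
It therefore suffices to decide whether the quadratic forms $\bm v^{*}\Jop\bm v$ and $\bm v^{*}\Rop\bm v$ are real or purely imaginary, and to find their sign, for arbitrary complex $\bm v$.

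For (i), I will use $\Jop^{\mathsf T}=-\Jop$ together with the fact that $\Jop$ is real, so $\Jop^{*}=\Jop^{\mathsf T}=-\Jop$. This gives
\begin{equation*}
\overline{\bm v^{*}\Jop\bm v}=\bm v^{*}\Jop^{*}\bm v=-\bm v^{*}\Jop\bm v ,
\end{equation*}
so $\lambda$ is purely imaginary or zero.

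For (ii), $\Rop$ is real symmetric, hence Hermitian. Its eigenvalues are therefore real, and $\bm v$ may be taken real. Nonnegativity then follows from $\Rop\succeq0$ via $\lambda=\bm v^{\mathsf T}\Rop\bm v\ge0$.

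For (iii), I will split $\lambda=\bm v^{*}\Kop\bm v=\bm v^{*}\Jop\bm v-\bm v^{*}\Rop\bm v$ and take real parts. By (i), the first term contributes nothing to the real part. It remains to show $\bm v^{*}\Rop\bm v\ge0$ for \emph{complex} $\bm v$.

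This last step is the only point needing care, because $\Rop\succeq0$ is stated for real vectors. I will write $\bm v=\bm a+i\bm b$ with $\bm a,\bm b\in\R^d$. Since $\Rop$ is symmetric, the cross terms $i(\bm a^{\mathsf T}\Rop\bm b-\bm b^{\mathsf T}\Rop\bm a)$ cancel, leaving
\begin{equation*}
\bm v^{*}\Rop\bm v=\bm a^{\mathsf T}\Rop\bm a+\bm b^{\mathsf T}\Rop\bm b\ge0 .
\end{equation*}
Hence $\operatorname{Re}\lambda=-\bm v^{*}\Rop\bm v\le0$.
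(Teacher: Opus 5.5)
Your proof is correct and follows the same route as the paper. For (iii), the paper likewise writes $\lambda=\bm v^\ast\Jop\bm v/\bm v^\ast\bm v-\bm v^\ast\Rop\bm v/\bm v^\ast\bm v$, uses that the first term is purely imaginary and the second real and nonnegative, and treats (i)--(ii) as standard. Your splitting $\bm v=\bm a+i\bm b$ to obtain $\bm v^\ast\Rop\bm v=\bm a^{\mathsf T}\Rop\bm a+\bm b^{\mathsf T}\Rop\bm b\ge0$ supplies the one step the paper asserts without justification.
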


\begin{proof}
The first two statements are standard consequences of real antisymmetry and
real symmetric positive semidefiniteness. For the third, let
$\bm v\in\mathbb C^d\setminus\{\bm0\}$ satisfy
$\Kop\bm v=\lambda\bm v$, and let $\bm v^\ast$ denote its conjugate transpose.
Then
\begin{equation}
  \lambda
  =\frac{\bm v^\ast\Jop\bm v}{\bm v^\ast\bm v}
   -\frac{\bm v^\ast\Rop\bm v}{\bm v^\ast\bm v}.
\end{equation}
The first quotient is purely imaginary, while the second is real and
nonnegative. Therefore
\begin{equation}
  \operatorname{Re}(\lambda)
  =-\frac{\bm v^\ast\Rop\bm v}{\bm v^\ast\bm v}
  \leq0.
\end{equation}
\end{proof}
The eigen-decomposition of $\Jop$ and $\Rop$ help to analyze the importance of certain latent directions for the dynamics. Further, they are useful for rank estimation of a potential low-rank ansatz. They do not remove the coordinate and potential-scale ambiguities discussed later. 

\subsection{Conservative motion, dissipation, and equilibria}
If $\Rop=0$, velocity is orthogonal to the energy gradient and motion is tangent to energy level sets. If $\Jop=0$, the model is a generalized gradient flow. The combined field superposes these motions. Every critical point, $\s(\z_\star,\p)=0$, is an equilibrium. The converse need not hold: a nonzero gradient can lie in the nullspace of $\Kop$.

The dissipation rate vanishes when $\Rop\s=0$. The state can still move under $\Jop\s$. Hence, zero dissipation is not equivalent to equilibrium.

\begin{remark}
    The dissipation identity does not imply that every equilibrium is asymptotically stable. Stability also depends on the local energy shape, operator nullspaces, and their interaction. In our implementation we included a trainable regularizing term $\beta I$, with $\beta > 0$, to ensure full rank of $\Kop$, and hence, for an equilibrium $\z_\star$ it follows that $\s(\z_\star,\p)=0$. Especially for future applications for Gibbs free energy minimization tasks, this could be useful.
\end{remark}

\subsection{Nonlinear expressivity and local dynamics}
Constant matrices $\Jop$ and $\Rop$ do not make the model linear. For the vector field \eqref{eq:latent-ode} its state dependence is determined by the energy gradient only.

For a fixed parameter \(\p\), an affine energy produces a constant latent vector field. The encoder would therefore have to map each physical trajectory to straight-line motion with constant velocity in latent space. This is likely too restrictive to produce a useful latent representation of complex magnetization dynamics.

For a quadratic energy, $\nabla_{\z}g_{\mathrm{quad}}$ is affine in $\z$, and the resulting vector field is also affine. 
Although still restrictive, such dynamics may be sufficient for some applications. A convex quadratic energy additionally imposes a simple energy geometry and can therefore provide a useful inductive bias.

For a deep nonlinear energy, the gradient depends nonlinearly on $\z$, allowing the vector field to vary along a trajectory. This dependence is also visible in its Jacobian,
\begin{equation}
\frac{\partial F}{\partial\z}(\z,\p) = (\Jop-\Rop)\nabla_{\z}^{2}\Genergy(\z,\p),
\end{equation}
which is state dependent whenever the energy Hessian is state dependent.

The deep--quadratic model combines both contributions:
\begin{equation}
g_{\mathrm{deep\text{-}quad}}=g_{\mathrm{quad}}+g_{\mathrm{deep}},
\end{equation}
and hence
\begin{equation}
F_{\mathrm{deep\text{-}quad}}=(\Jop-\Rop)\nabla_{\z}g_{\mathrm{quad}} + (\Jop-\Rop)\nabla_{\z}g_{\mathrm{deep}}.
\end{equation}
The quadratic term provides an affine contribution, while the deep term provides a nonlinear correction. This decomposition is similar in spirit to the combination of linear and nonlinear terms in \cite{linot2023stabilized}, but here both terms are generated by energy gradients and retain the conservative--dissipative structure.

\subsection{Potential offset and scale}
For any parameter-dependent scalar $b(\p)$,
\begin{equation}\widetilde g(\z,\p)=\Genergy(\z,\p)+b(\p)\quad\Longrightarrow\quad \nabla_{\z}\widetilde g=\nabla_{\z}\Genergy.\end{equation}
Thus, the energy offset is unidentifiable. For any $a>0$,
\begin{equation}
 \widetilde g=a\Genergy,\qquad \widetilde\Jop=a^{-1}\Jop,\qquad \widetilde\Rop=a^{-1}\Rop
 \label{eq:scale-ambiguity}
\end{equation}
leaves the vector field unchanged. Trajectory supervision cannot fix an absolute energy or operator scale. Additional calibration or physical-energy supervision would be needed for physical interpretation. If such an additional calibration has practical benefits is an open question. 

\begin{remark}
A potential threat of energy supervision would be an increased stiffness of the latent dynamics, while a benefit could be better physical interpretation. However, we note that such supervision still does not guarantee that the learned effective field $\widetilde H_{\mathrm{eff}} = \frac{\partial \Genergy}{\partial \m} (\Ddec(\m))$ is a good approximation of the actual effective field unless snapshots are sampled at a sufficient temporal rate, and whether an increased temporal resolution increases the stiffness of the latent dynamics is another open question.
\end{remark}

\section{Learning from snapshot windows}
\subsection{Short windows extracted from long trajectories}

Let $\mathcal T$ denote the dataset of $M$ complete observed trajectories. During training, each trajectory is decomposed into contiguous windows of $k$ snapshots, yielding the windowed dataset
\begin{equation}
\mathcal T_k = \left\{\{(t_{\ell+i}^{(q)}, \m_{\ell+i}^{(q)}, \p^{(q)})\}_{i=0}^{k-1}\;:\;q=1,\ldots,M,\;\ell=0,\ldots,k_q-k
\right\}.
\label{eq:windowed-dataset}
\end{equation}

Windows may overlap, but complete physical trajectories are assigned to the training, validation, and test sets before window extraction. Hence, snapshots from the same trajectory cannot appear in different data partitions.

Each element of $\mathcal T_k$ is treated as an independent training sample. For a single window, the trajectory and window indices are omitted, and we write
\begin{equation}
\{\left(t_i,\m_i,\p\right)\}_{i=0}^{k-1}.
\end{equation}

The corresponding snapshots are encoded and give the latent reference states 
\begin{equation}
\z_i = \Eenc(\m_i), \qquad i=0,\ldots,k-1.
\end{equation}

The latent ODE~\eqref{eq:latent-ode} is integrated over $[t_0,t_{k-1}]$, with initial condition $\zhat(t_0) = \z_0$, and evaluated at the observation times $\zhat_i = \zhat(t_i)$ for $i=0,\ldots,k-1.$

The predicted latent states are compared to the actual latent reference states. Further, they are passed to the decoder which allows for comparison of the prediction in $\mathcal X$.
The numerical solver may use arbitrary internal time steps and only the solution values at the observation times are required for loss evaluation.

\subsection{Latent trajectory loss}
The latent trajectory loss compares the ODE solution with the encoded
observation sequence:
\begin{equation}
  \mathcal L_z
  =\frac{1}{(k-1)d}
   \sum_{i=1}^{k-1}
   \norm{\zhat_i-\z_i}_2^2.
  \label{eq:latent_loss}
\end{equation}
The initial index is omitted because $\zhat_0=\z_0$ by construction. This loss
encourages the encoder to produce coordinates whose evolution can be described
by the structured latent ODE. Because the targets $\z_i$ also depend on the
encoder parameters, $\mathcal L_z$ is coordinate dependent and can change
under a rescaling of the latent space.

\subsection{Decoded rollout loss and joint objective}
Accuracy in latent coordinates does not automatically imply accuracy after
decoding. We therefore compare the decoded integrated states directly with the
observations:
\begin{equation}
  \mathcal L_{\mathrm{roll}}
  =\frac{1}{3 k N}
   \sum_{i=0}^{k-1}
   \norm{\Ddec(\zhat_i)-\m_i}_2^2.
  \label{eq:rollout_loss}
\end{equation}
This term exposes the decoder to latent states reached by numerical
integration, not only to exact encoder outputs. It also anchors training to the
observable state rather than solely to a non-identifiable latent coordinate.
It is important to note that we do not omit the initial index for the rollout loss. This term is the exact reconstruction loss of the first state and notably improves training.

The full objective is given by
\begin{equation}
 \mathcal L(\Theta)=\lambda_z\mathcal L_z+\lambda_{\mathrm{roll}}\mathcal L_{\mathrm{roll}},\qquad \lambda_z,\lambda_{\mathrm{roll}}>0.
 \label{eq:objective}
\end{equation}
All parameters are optimized jointly. Gradients pass through the encoder targets, ODE solution, scalar gradient, operators, and decoder. The two losses counter simple degeneracies: the decoded term anchors the model to observations, while the latent term directly compares coordinate trajectories.

\section{Differentiation, integration, and scaling}

\subsection{Neural ODE viewpoint}
A neural ordinary differential equation specifies a parameterized vector field
$F_\Theta$ and defines predictions as the solution of an initial-value problem
\cite{chen2018neural},
\begin{equation}
  \dot\z(t)=F_\Theta(t,\z(t),\p),
  \qquad
  \z(t_0)=\z_0,
  \label{eq:generic_neural_ode}
\end{equation}
or, equivalently,
\begin{equation}
  \z(t)=\z_0+\int_{t_0}^{t}F_\Theta(\tau,\z(\tau),\p)\,\mathrm d\tau.
\end{equation}
Unlike a network with a fixed number of discrete layers, the requested output
times are handled by a numerical ODE solver, which may choose its own internal
steps. Training differentiates a loss through this numerical solution map. In
the present model, the vector field is $F_{\eta,\omega}$ from
\eqref{eq:latent-ode}, rather than an unrestricted neural network. The larger
training parameter set $\Theta=(\theta,\xi,\eta,\omega)$ additionally includes
the encoder and decoder parameters, although they are not part of the latent
right-hand side itself. While a generic neural ODE is certainly also a viable model for latent space integration, it imposes a less restrictive inductive bias compared to our energy based approach.

\subsection{Reverse-over-reverse differentiation}
At one right-hand-side evaluation, an inner reverse-mode derivative computes $\s=\nabla_{\z}\Genergy(\z,\p)$ and the operator produces $F=(\Jop - \Rop)\s$. Let $q_F$ be an incoming outer reverse-mode cotangent, defined by $\delta\mathcal L=q_F^{\mathsf T}\delta F$. Reverse differentiation through the operator gives
\begin{equation}
 q_s=(\Jop - \Rop)^{\mathsf T}q_F.
 \label{eq:gradient-cotangent}
\end{equation}
The outer derivative must propagate this sensitivity through the inner gradient in two directions. Its state contribution is the Hessian--vector product
\begin{equation}
 \left(\frac{\partial F}{\partial\z}\right)^{\mathsf T}q_F=\bm H_g(\z,\p)q_s,
 \label{eq:hvp}
\end{equation}
and its scalar-network parameter contribution is the mixed product
\begin{equation}
 \left(\frac{\partial F}{\partial\eta}\right)^{\mathsf T}q_F=
 \left(\frac{\partial}{\partial\eta}\nabla_{\z}\Genergy(\z,\p)\right)^{\mathsf T}q_s.
 \label{eq:mixed-product}
\end{equation}
Automatic differentiation evaluates these products without constructing a dense Hessian or mixed-derivative matrix. The calculation is reverse-over-reverse: an inner reverse pass constructs the field and an outer pass differentiates the loss through that construction and the numerical solve. 
Differentiating through adaptive solvers also requires care: continuous adjoints and direct discrete differentiation can produce different numerical gradients at finite tolerance \cite{gholami2019anode}.

\subsection{Continuous and discrete dissipation}
\eqref{eq:dissipation} concerns the exact continuous solution. A generic finite-tolerance integrator need not preserve monotonicity at every accepted step or saved output. A small numerical increase is distinct from violation of the analytic identity, which cannot occur when the operators satisfy
\eqref{eq:operator-conditions}. Solver selection and tolerances should therefore be reported as evaluation choices, not presented as part of the continuous-time guarantee.

\subsection{Computational scaling}
Let $C_g$ denote the cost of evaluating the scalar network and its gradient. One field evaluation costs approximately $C_g+\mathcal O(d^2)$ and operator storage is $\mathcal O(d^2)$ for dense operators. Structured rank-$r$ factorizations could reduce these contributions to $\mathcal O(dr)$ work and $\mathcal O(dr)$ storage if $r\ll d$. Antisymmetry and positive semidefiniteness can be retained through suitable factorizations, but the resulting rank restrictions may reduce expressivity and are not studied here. The decoder is evaluated at requested outputs rather than every internal ODE stage. This separation does not by itself establish a speed advantage: a meaningful comparison must include encoding, integration, decoding, and accuracy.

\begin{remark}
    If the latent space retains spatial dimensions the computational cost is $C_g+\mathcal O(|\mathcal S|d^2)$, where $|\mathcal S|$ denotes the number of spatial dimensions of the latent space.
\end{remark}

\section{Relation to structured dynamical models}
For canonical Hamiltonian dynamics, a fixed antisymmetric symplectic matrix acts on a Hamiltonian gradient. The antisymmetric term here has the same algebraic energy-preserving property, but the learned latent coordinates are not assumed canonical and the channel operator is learned. Pure gradient systems take the form $\dot\z=-M\nabla G(\z)$ with $M\succeq0$ and decrease $G$. The present field adds an antisymmetric component without changing the one-scalar dissipation identity.

Metriplectic and GENERIC formulations combine antisymmetric and symmetric brackets \cite{morrison1986paradigm,grmela1997dynamics}. Complete GENERIC formulations generally distinguish energy and entropy generators and impose additional degeneracy conditions. The one-scalar construction used here is simpler and should not be identified with those thermodynamic formalisms. Compared with an unrestricted latent neural ODE $\dot\z=f_\psi(\z,\p)$, the structured field occupies a smaller hypothesis class. It trades unrestricted vector-field flexibility for an exact continuous-time scalar-decrease property and explicit operator roles; whether that trade is beneficial requires empirical comparison.

\section{Numerical results}\label{sec:results}
We use the two applied-field directions of NIST $\mu$MAG Standard Problem~4 \cite{nist_sp4}: field~1 at $170^\circ$ and field~2 at $190^\circ$. Each comprises 200 uniformly spaced amplitudes: \SIrange{20.00}{29.95}{mT} for field~1 and \SIrange{30.00}{39.95}{mT} for field~2, at \SI{0.05}{mT} increments. Within these ranges, field1 exhibits comparatively smooth and regular dynamics, whereas field2 displays more complex trajectory variation.

The nominal trajectories at \SI{25}{mT} and \SI{36}{mT}, respectively, are held out as reference cases. The remaining 199 trajectories are split at trajectory level into 139 training trajectories $\mathcal{T}_{\mathrm{train}}$, 20 validation trajectories $\mathcal{T}_{\mathrm{val}}$, and 40 test trajectories $\mathcal{T}_{\mathrm{test}}$.

All trajectories begin from the same relaxed S-state and contain 101 snapshots from 0 to \SI{1}{ns}, inclusive, at \SI{0.01}{ns} intervals, on a $100\times25\times1$ grid with three magnetization components. Field amplitude is scaled to $[-1,1]$ using bounds fitted on the training set. This experiment concerns the stated grid and field directions only.

\subsection{Implementation and optimization}
The implemented encoder has channel widths $(16,32,64,128)$ and a matched transpose-convolution decoder. Its retained latent state has shape $4\times1\times1\times128$: four spatial sites with 128 channels each. All spatially nontrivial autoencoder and energy convolutions use \texttt{VALID} padding. Where a spatial extent is one, a numerically equivalent $1\times1\times1$ convolution is used when a larger \texttt{VALID} kernel is inapplicable. Hidden activations are GELUs.

\begin{table}[ht]
\centering\small
\caption{Implemented autoencoder. Counts include biases where present.}
\label{tab:architecture}
\begin{tabular}{llr}\toprule
Stage & output tensor & parameters\\\midrule
Input & $100\times25\times1\times3$ & 0\\
Encoder block 1 & $49\times11\times1\times16$ & 1,488\\
Encoder block 2 & $23\times4\times1\times32$ & 8,768\\
Encoder block 3 & $10\times1\times1\times64$ & 34,944\\
Encoder block 4 & $4\times1\times1\times128$ & 57,600\\
Latent $1\times1\times1$ convolution & $4\times1\times1\times128$ & 16,384\\
Decoder block 1 & $10\times1\times1\times128$ & 82,176\\
Decoder block 2 & $23\times4\times1\times64$ & 139,456\\
Decoder block 3 & $49\times11\times1\times32$ & 34,912\\
Decoder block 4 & $100\times25\times1\times16$ & 8,752\\
Reconstruction $1\times1\times1$ convolution & $100\times25\times1\times3$ & 48\\
\midrule Autoencoder total && 384,528\\\bottomrule
\end{tabular}
\end{table}

We compare three scalar-energy families. The quadratic model is
\begin{equation}
 g_{\mathrm{quad}}(\z,\p)=\tfrac12\norm{Q(\z)-c(\p)}_F^2,
\end{equation}
with a rank of 32. The VALID convolution gives $Q(\z)\in\mathbb R^{2\times1\times1\times32}$, while $c(\p)\in\mathbb R^{32}$ is broadcast over the two retained output sites; this model has 12,352 parameters. The deep model, denoted $g_{\mathrm{deep}}$, concatenates the physical parameters with the latent tensor broadcasted over the spatial sites. It then uses a spatial convolution followed by $1\times1\times1$ convolutions of widths $(32,32,16,1)$ with GELUs, sums the resulting scalar field, and applies softplus. It has 14,017 parameters. The deep--quadratic model is the direct additive combination
\begin{equation}
 g_{\mathrm{deep\text{-}quad}}(\z,\p)=g_{\mathrm{deep}}(\z,\p)+g_{\mathrm{quad}}(\z,\p),
\end{equation}
and has 26,369 parameters. The quadratic contribution supplies an affine term to the latent vector field, while the deep contribution permits state-dependent curvature. This rank-factorized quadratic energy is distinct from a low-rank channel-operator parameterization, which is not used.

The spatial convolutions in the energy models are structurally important because the channel operators act sitewise and do not themselves move information between retained sites. Both $Q$ in the quadratic model and the first layer of the deep model use a spatial kernel that mixes neighboring latent sites; the subsequent $1\times1\times1$ layers mix channels locally. Consequently, the gradient at one retained site can depend on neighboring latent states. The convolutions are therefore the model's direct route for representing cross-site coupling, including transport-like propagation or redistribution patterns in the latent dynamics. 

Each energy is paired with two vector-field families. The dissipative-only family uses $\dot\z_a=-\Rop\nabla_{\z_a}g$, whereas the antisymmetric--dissipative (A--D) family uses $\dot\z_a=(\Jop-\Rop)\nabla_{\z_a}g$. The channel operators are dense $128\times128$ matrices shared over the four sites: $\Jop=A-A^{\mathsf T}$ and $\Rop=B^{\mathsf T}B+\beta I$, with trainable positive $\beta$. Crossing the two fields with the quadratic, deep, and additive deep--quadratic energies gives the six compared models. The dissipative-only and A--D fields add 16,385 and 32,769 parameters, respectively. Complete-model counts, ordered as dissipative-only quadratic, deep, deep--quadratic, then A--D quadratic, deep, deep--quadratic, are 413,265, 414,930, 427,282, 429,649, 431,314, and 443,666.

We use a window size $k=5$, $\lambda_z=\lambda_{\mathrm{roll}}=1$, batch size 32, and 700 epochs. AdamW uses weight decay $10^{-4}$ and a learning rate that warms from $10^{-4}$ to $10^{-3}$ during the first 5\% of updates and then decays cosinusoidally to $10^{-5}$. 

Our implementation uses Diffrax \cite{kidger2021neural} with the explicit
Tsitouras $5(4)$ Runge--Kutta method (\texttt{Tsit5}). The relative tolerance is set to $10^{-3}$ and absolute tolerance to $10^{-5}$ for training and evaluation. The adaptive solver returns states at the snapshot times used by the loss even though its internal stages generally occur at different times.

Gradients are computed with Diffrax's \texttt{RecursiveCheckpointAdjoint}. This method differentiates the discrete solver computation by reverse-mode AD while retaining only selected primal states and recomputing others during the backward pass. Recursive checkpointing therefore trades additional computation for lower memory use. It should not be confused with a continuous backsolve adjoint that integrates a separate adjoint ODE backward in time. The checkpointed discrete adjoint differentiates the numerical trajectory actually produced by the chosen solver and tolerances. Checkpointing a discrete solve trades recomputation for storage, but it does not remove the products in \eqref{eq:hvp} and \eqref{eq:mixed-product}.

Evaluation uses one encoded initial state, no re-encoding of intermediate reference snapshots and no unit norm normalization. We use root mean square error (RMSE) as an error metric. Note that due to the unit norm constraint this also equals the relative RMSE (assuming the mean is not applied along the magnetization dimension). In addition, we report mean angular error (MAnE) as well as solver statistics.

The primary objective is to assess the overall model design and its applicability to this problem rather than to maximize predictive performance. Because the space of hyperparameters, architectural choices, and training algorithms is large, we use fixed configurations and do not perform systematic hyperparameter optimization. The reported results should therefore not be interpreted as estimates of the best attainable performance, and further tuning may yield substantial improvements.

\subsection{Joint vector-field and energy comparison}
Tables~\ref{tab:field1-comparison} and \ref{tab:field2-comparison} pool 40 held-out trajectories per field. The antisymmetric--dissipative deep--quadratic configuration has the smallest RMSE in this six-model, single-seed comparison: 0.0459 for field~1 and 0.2929 for field~2. For comparison, the reference solver (Tsit5 with relative tolerance $10^{-3}$ and absolute tolerance $10^{-5}$) requires \SI{0.349}{s} and \num{1034} steps for the field 1 case and \SI{0.330}{s} and \num{971} steps for the field 2 case on the same machine with single precision.

\begin{table}[ht]\centering\scriptsize
\caption{Field~1 comparison on 40 held-out trajectories (\SIrange{20.00}{29.95}{mT}, $170^\circ$). A--D denotes antisymmetric--dissipative and D dissipative-only. We report the RMSE for the windowed dataset $\mathcal{T}_\mathrm{test}^k$, and RMSE, MAnE, mean inference time (encoding, full rollout, and decoding) as well as required solver steps (accepted and rejected) for $\mathcal{T}_\mathrm{test}$.}\label{tab:field1-comparison}
\resizebox{\textwidth}{!}{\begin{tabular}{lrrrrrrrr}
\toprule
Model & parameters & RMSE for $\mathcal{T}_\mathrm{test}^k$ & RMSE & MAnE ($^\circ$) & s/trajectory & steps \\
\midrule
A--D quadratic & 429649 & 0.0070 & 0.3201 & 12.90 & 0.028 & 34.0 \\
A--D deep & 431314 & 0.0071 & 0.2063 & 6.71 & 0.047 & 40.4 \\
A--D deep--quadratic & 443666 & \textbf{0.0068} & \textbf{0.0459} & \textbf{1.68} & 0.073 & 50.2 \\
D quadratic & 413265 & 0.0091 & 0.7974 & 39.10 & 0.061 & 99.4 \\
D deep & 414930 & 0.0100 & 1.4309 & 64.75 & 0.040 & 28.9 \\
D deep--quadratic & 427282 & 0.0084 & 0.7922 & 38.26 & 0.099 & 70.2 \\
\bottomrule
\end{tabular}
}\end{table}
\begin{table}[ht]\centering\scriptsize
\caption{As \Cref{tab:field1-comparison}, for field~2 (\SIrange{30.00}{39.95}{mT}, $190^\circ$).}\label{tab:field2-comparison}
\resizebox{\textwidth}{!}{\begin{tabular}{lrrrrrrrr}
\toprule
Model & parameters & RMSE for $\mathcal{T}_\mathrm{test}^k$ & RMSE & MAnE ($^\circ$) & s/trajectory & steps \\
\midrule
A--D quadratic & 429649 & \textbf{0.0411} & 0.7430 & 33.17 & 0.031 & 37.2 \\
A--D deep & 431314 & 0.0422 & 0.3363 & 12.23 & 0.053 & 46.1 \\
A--D deep--quadratic & 443666 & \textbf{0.0411} & \textbf{0.2929} & \textbf{9.47} & 0.094 & 47.4 \\
D quadratic & 413265 & 0.0442 & 1.3070 & 77.46 & 0.042 & 62.0 \\
D deep & 414930 & 0.0468 & 0.7632 & 31.49 & 0.040 & 30.0 \\
D deep--quadratic & 427282 & 0.0436 & 0.6678 & 29.05 & 0.095 & 70.3 \\
\bottomrule
\end{tabular}
}\end{table}

\begin{figure}[ht]
\centering\includegraphics[width=.92\textwidth]{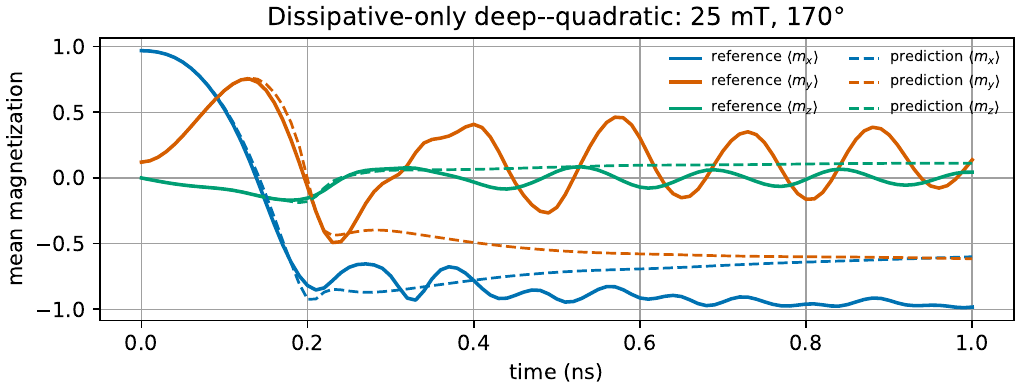}
\caption{Uninterrupted dissipative-only deep--quadratic rollout at \SI{25}{mT}, $170^\circ$, on the $100\times25\times1$ grid. Solid curves are reference spatial means and dashed curves are decoded-model spatial means.}
\label{fig:dissipative-failure-field1}
\end{figure}

\begin{figure}[ht]
\centering\includegraphics[width=.92\textwidth]{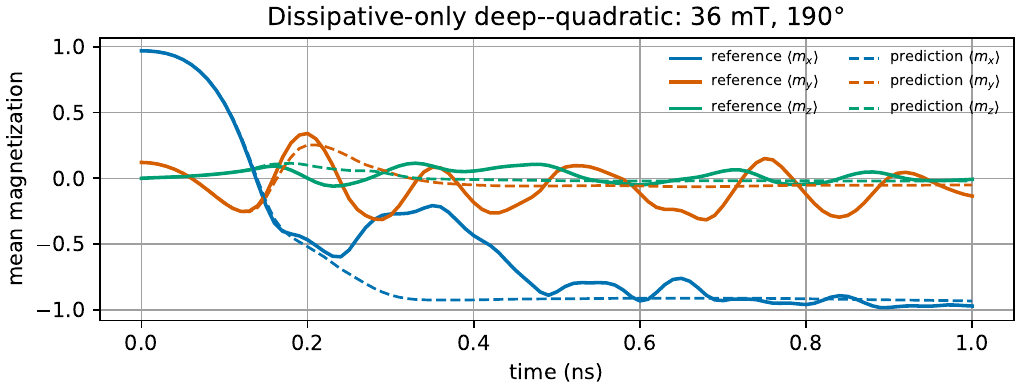}
\caption{As \Cref{fig:dissipative-failure-field1}, at \SI{36}{mT}, $190^\circ$.}
\label{fig:dissipative-failure-field2}
\end{figure}

It can be seen that a small error on the windowed dataset $\mathcal{T}_\mathrm{test}^k$ does not pose any strong guarantees for longer rollouts. 
The dissipative-only models have much smaller window errors than their uninterrupted-rollout errors, but do not reproduce the full rollouts in these two comparisons. This supports the usefulness of the antisymmetric term for these data and configurations. 

\Cref{fig:dissipative-failure-field1,fig:dissipative-failure-field2} show the uninterrupted dissipative-only deep--quadratic rollouts for the two reference fields. The comparison of predicted and reference mean magnetization illustrates how errors accumulate over a complete trajectory even when short-window errors remain small. Only dissipation of latent energy seems to be too restrictive to accurately learn the latent dynamics.

The quadratic energy model also has difficulties to reproduce the dynamics accurately and underfits the data even though the error is small on $\mathcal{T}_\mathrm{test}^k$.

\subsection{Reference trajectories, learned energy, and operator diagnostics}
\Cref{fig:best-rollout-field1,fig:best-rollout-field2} show the corresponding full reference-case rollouts for the antisymmetric--dissipative deep--quadratic models. Each figure presents the mean-magnetization trajectory together with its rollout error, the change in learned latent energy and the corresponding Gibbs energy of the reference. It can be seen that the latent energy does not qualitatively coincide with the Gibbs energy. The model performs quite good for field 1 and is able to capture the dynamics while the energy is decreasing along the trajectory. For the difficult field 2 case the model is still able to approximate the overall dynamics, however with a much larger error.

\begin{figure}[ht]
\centering\includegraphics[width=.92\textwidth]{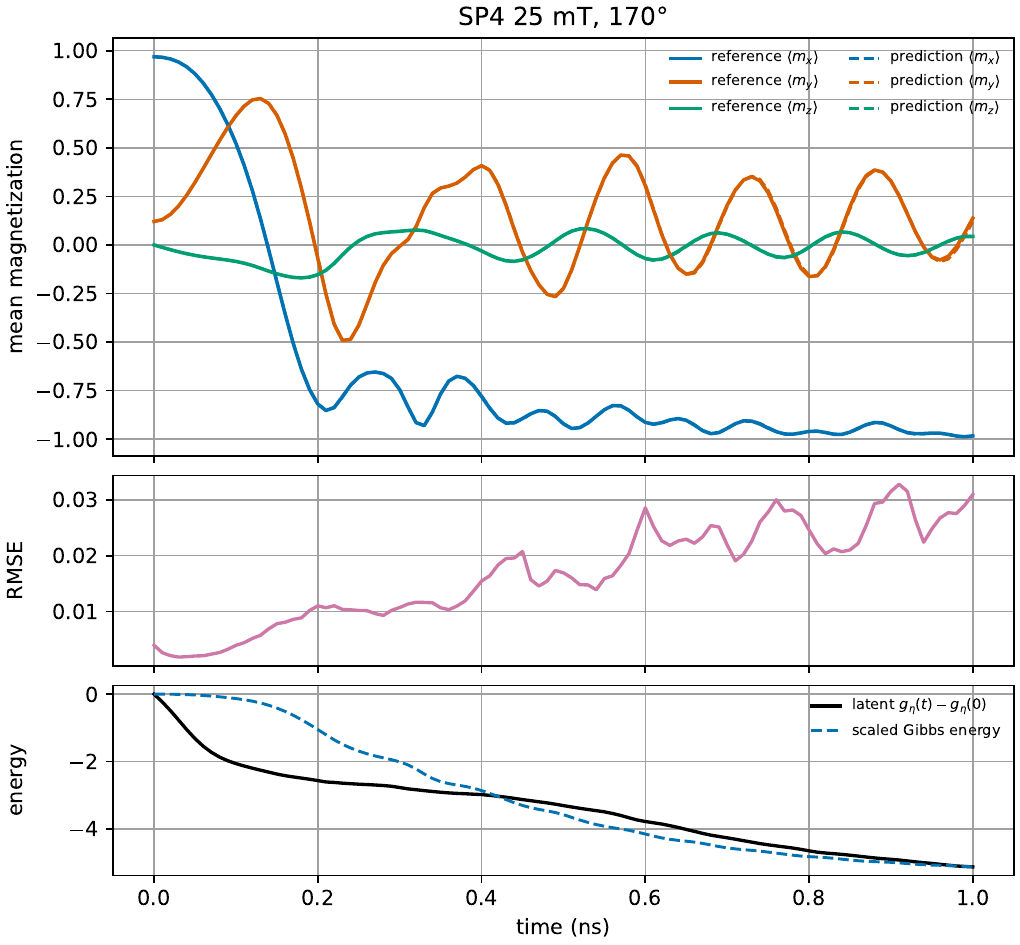}
\caption{Post-hoc test-leading antisymmetric--dissipative deep--quadratic rollout at \SI{25}{mT}, $170^\circ$: spatially averaged magnetization, RMSE, $g_\eta(t)-g_\eta(0)$ and scaled Gibbs energy of the reference solution. The rollout is evaluated on the $100\times25\times1$ grid with relative tolerance $10^{-3}$ and absolute tolerance $10^{-5}$. The learned latent energy does not correspond to the Gibbs energy.}\label{fig:best-rollout-field1}
\end{figure}

\begin{figure}[ht]
\centering\includegraphics[width=.92\textwidth]{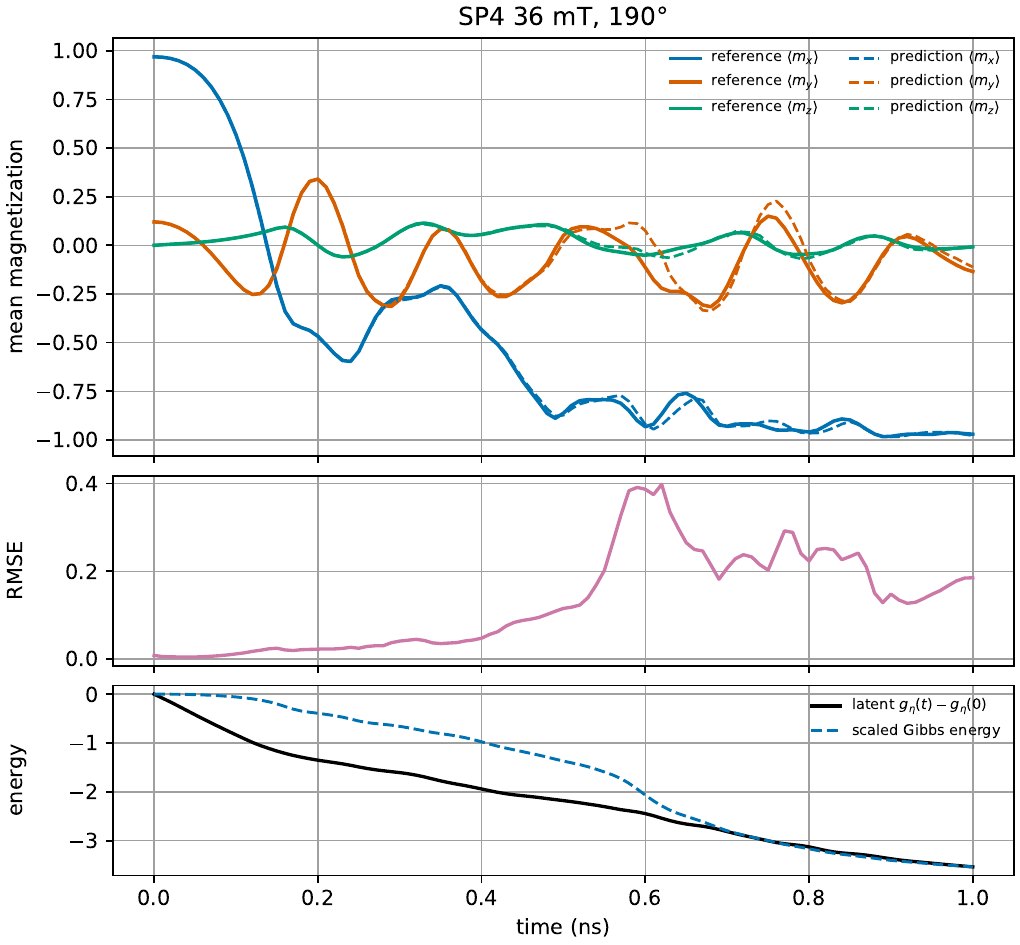}
\caption{As \Cref{fig:best-rollout-field1}, at \SI{36}{mT}, $190^\circ$.}\label{fig:best-rollout-field2}
\end{figure}

All saved learned-energy traces of the reported models were non-increasing. This observation is consistent with \eqref{eq:dissipation}.

The spatial structure of these rollouts is displayed in \Cref{fig:snapshots-field1,fig:snapshots-field2}. The angle maps compare the reference and decoded states at several times, while the error maps indicate vector differences.

\begin{figure}[ht]
\centering\includegraphics[width=.92\textwidth]{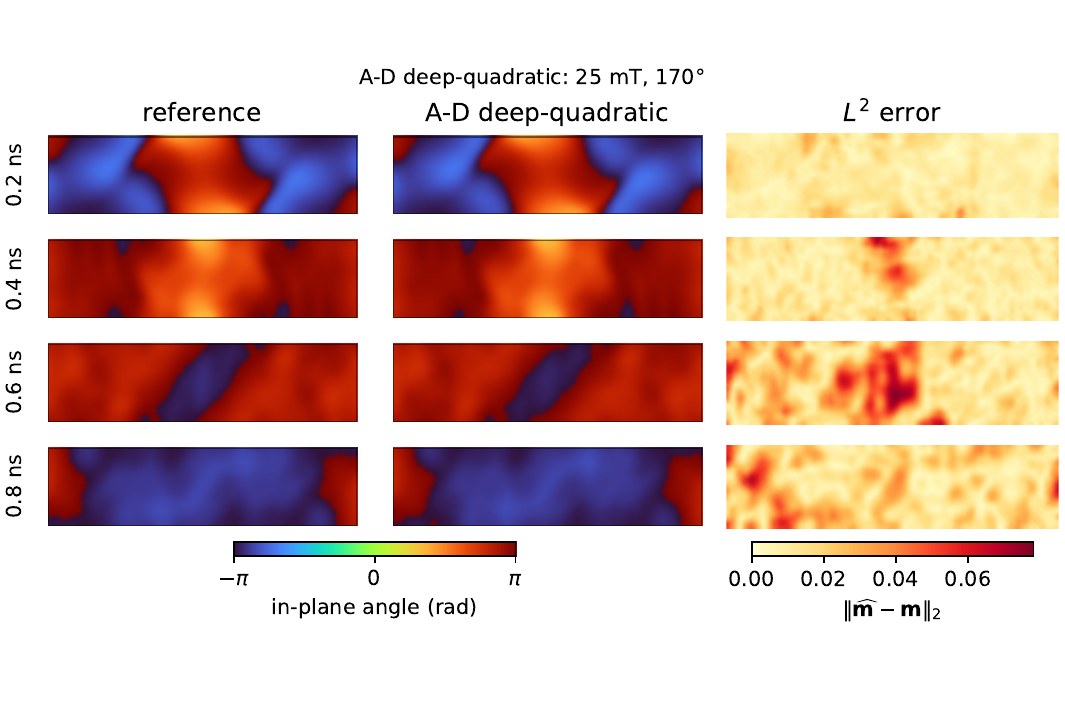}
\caption{Reference and decoded in-plane magnetization angle for the A--D deep--quadratic model at \SI{25}{mT}, $170^\circ$, on the $100\times25\times1$ grid. Both angle images use the common interval $[-\pi,\pi]$. The right column is the pointwise magnitude of the decoded-minus-reference magnetization vector, with one error scale shared across the four displayed times.}\label{fig:snapshots-field1}
\end{figure}

\begin{figure}[ht]\centering\includegraphics[width=.92\textwidth]{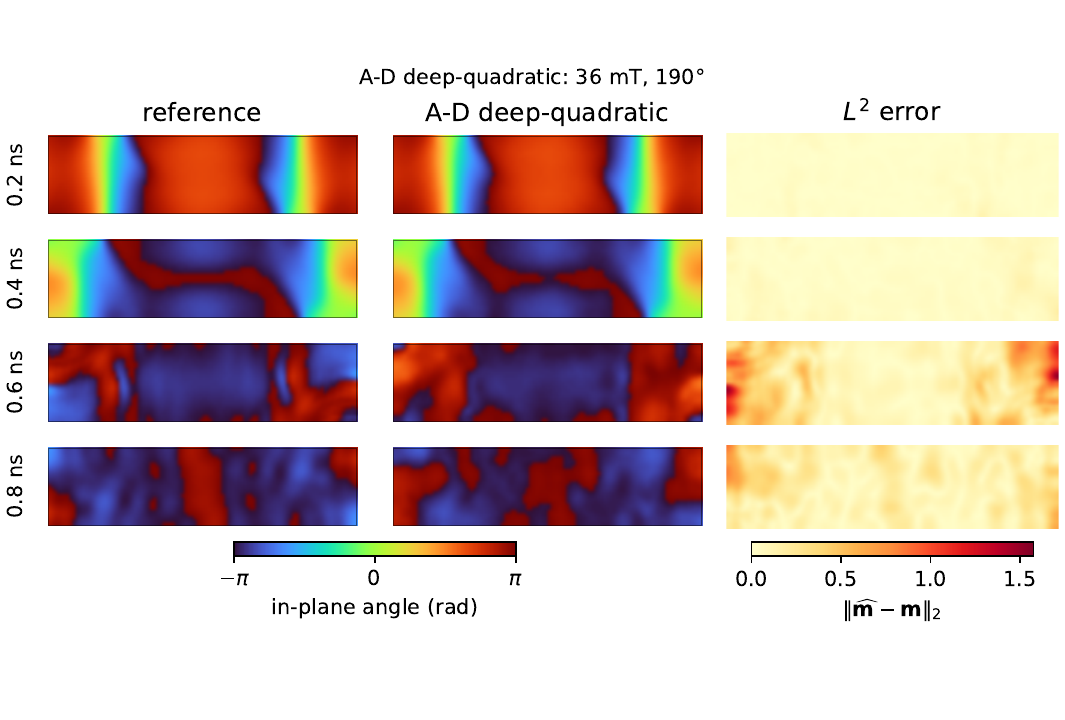}
\caption{As \Cref{fig:snapshots-field1}, at \SI{36}{mT}, $190^\circ$.}\label{fig:snapshots-field2}
\end{figure}

\begin{figure}[ht]
\centering\includegraphics[width=.82\textwidth]{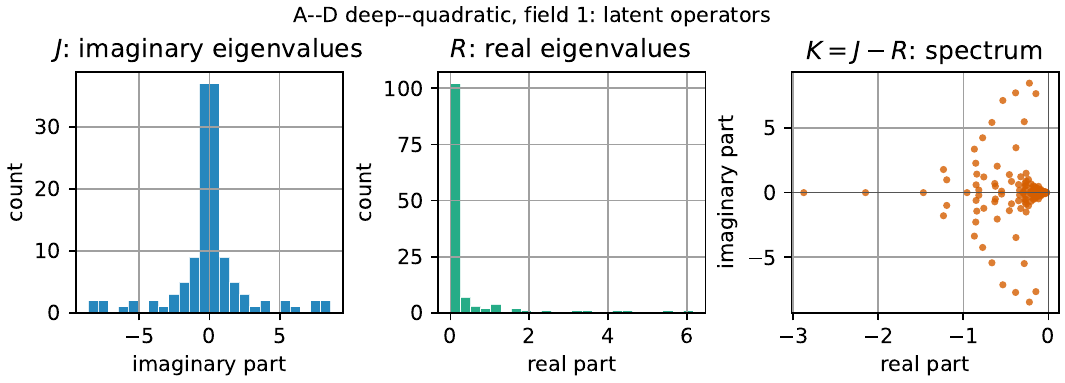}
\caption{Operator eigenvalues for the field~1 A--D deep--quadratic model. The left and middle panels histogram the imaginary parts of the $\Jop$ eigenvalues and real parts of the $\Rop$ eigenvalues; the right panel shows the complex spectrum of $\Kop=\Jop-\Rop$. The structure requires purely imaginary or zero eigenvalues for $\Jop$, nonnegative real eigenvalues for $\Rop$, and nonpositive real parts for eigenvalues of $\Kop$.}\label{fig:operator-eigenvalues-field1}
\end{figure}
\begin{figure}[ht]
\centering\includegraphics[width=.82\textwidth]{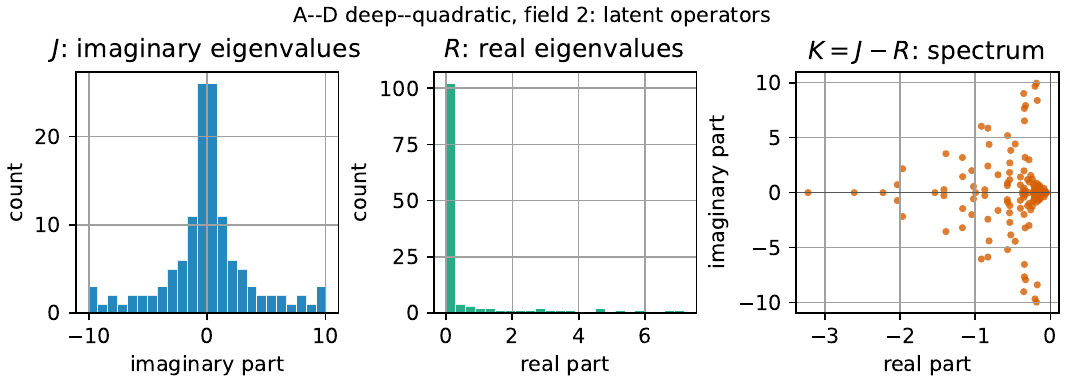}
\caption{As \Cref{fig:operator-eigenvalues-field1}, for the field~2 A--D deep--quadratic model.}\label{fig:operator-eigenvalues-field2}
\end{figure}

\Cref{fig:operator-eigenvalues-field1,fig:operator-eigenvalues-field2} report the learned channel-operator spectra for the two selected models. 
These plots provide numerical diagnostics of the imposed antisymmetric and dissipative parameterizations. The antisymmetric operator exhibits a richer spectral structure, with conjugate eigenvalue pairs distributed over a broad interval of the imaginary axis. The substantially broader spectrum of the antisymmetric operator, compared with the dissipative eigenvalues clustered near zero, suggests that the antisymmetric component acts across a wider range of latent dynamical scales and likely plays a more important role in the learned evolution. For the parameter $\beta$, we find that the term is quite small after training: \num{3.766e-05} for field 1 and \num{1.364e-05} for field 2.

\begin{figure}[ht]
\centering\includegraphics[width=.92\textwidth]{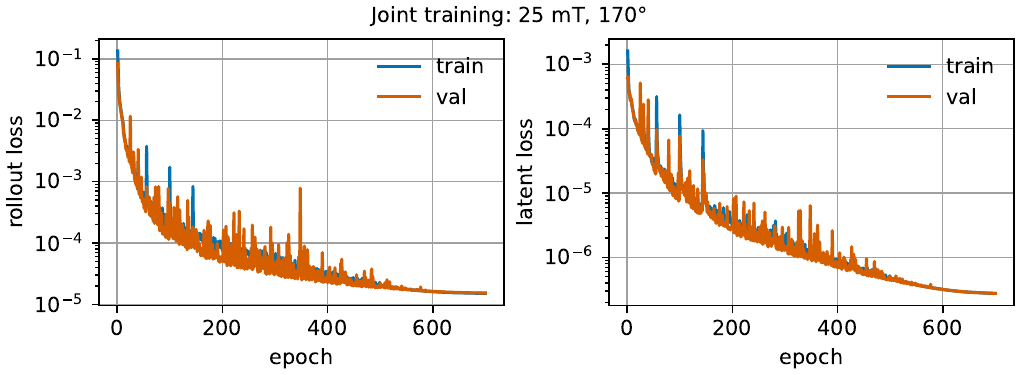}
\caption{Training and validation decoded-rollout and latent losses for the field~1 A--D deep--quadratic model.}\label{fig:training-field1}
\end{figure}
\begin{figure}[ht]
\centering\includegraphics[width=.92\textwidth]{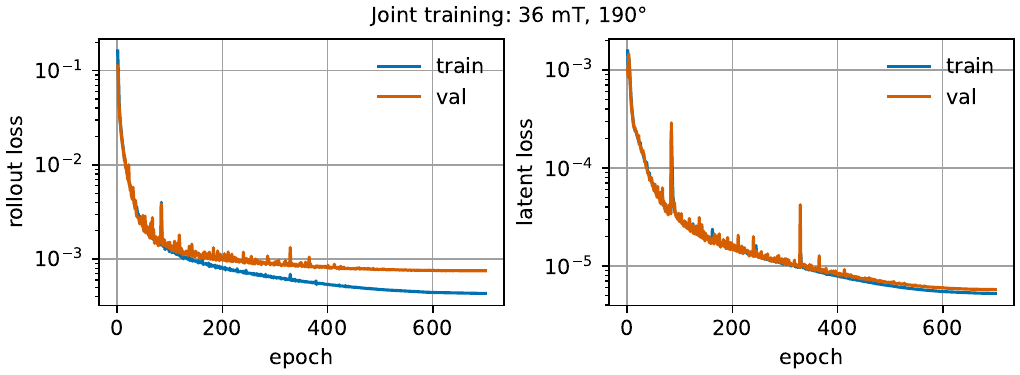}
\caption{As \Cref{fig:training-field1}, for the field~2 A--D deep--quadratic model.}\label{fig:training-field2}
\end{figure}
The optimization histories in \Cref{fig:training-field1,fig:training-field2} show the training and validation contributions used to fit the two selected models. For field 2, both training and validation losses remain comparatively high compared to the field 1 case. This could indicate underfitting. We also see a slight generalization gap and more data could help.

\subsection{Extrapolation beyond the training horizon}
The A--D deep and deep--quadratic models were extended to \SI{2}{ns} to further test generalization. The paired mean-magnetization trajectories and their errors are shown in \Cref{fig:extrapolation-field1,fig:extrapolation-field2}. The vertical marker separates the training horizon from the extrapolated interval and makes the subsequent error development visible for both energy models. 

For field 1, the deep–quadratic model remains close to the full-order reference throughout the \(\SI{2}{ns}\) interval and reproduces the qualitative magnetization evolution beyond the training horizon. In contrast, the deep model quickly departs from the reference and relaxes toward a spurious state. This result suggests that the quadratic contribution provides a useful inductive bias for preserving the relevant long-time dynamics in this case. For field 2, the deep-quadratic model still performs better than the deep model. It is closer to the reference, but the difference is less clear. The quadratic energy contribution provides an affine component of the latent vector field and is analogous to linear components considered for long-time neural-ODE forecasting \cite{linot2023stabilized}.

\begin{figure}[ht]\centering\includegraphics[width=.92\textwidth]{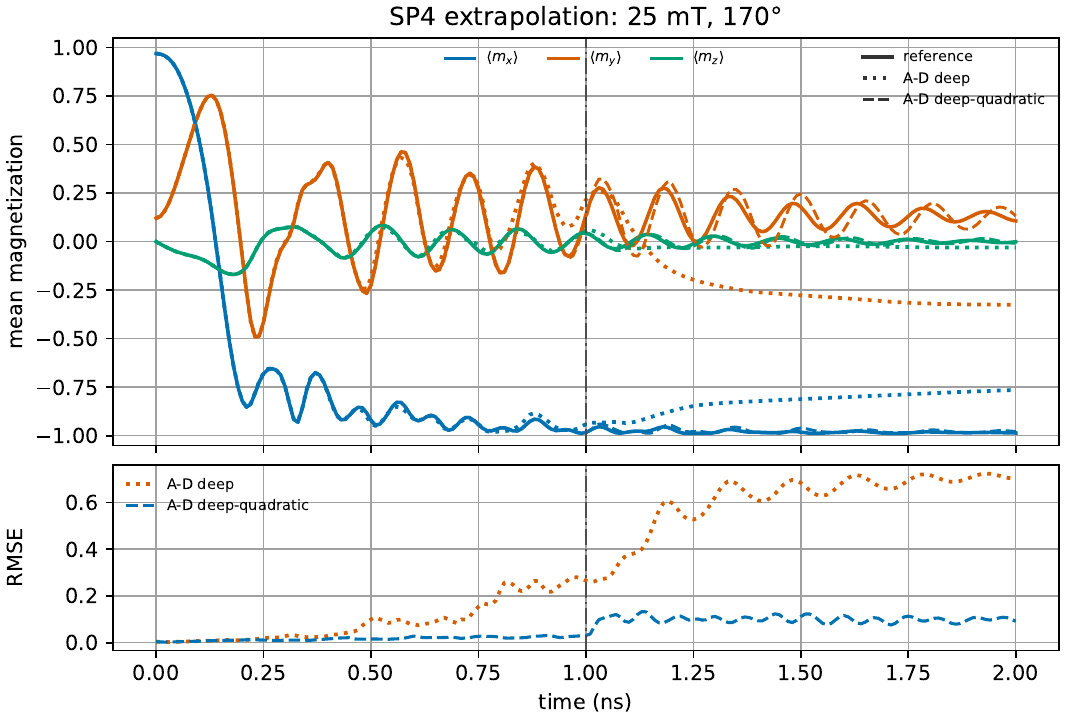}
\caption{Paired \SI{2}{ns} rollout at \SI{25}{mT}, $170^\circ$. The upper panel overlays spatially averaged reference, A--D deep, and A--D deep--quadratic magnetization; color denotes component and line style denotes source. The lower panel shows RMSE for both learned models. The vertical marker denotes the \SI{1}{ns} training horizon. Both models encode exact copies of the same fresh full-order reference state.}\label{fig:extrapolation-field1}\end{figure}
\begin{figure}[ht]\centering\includegraphics[width=.92\textwidth]{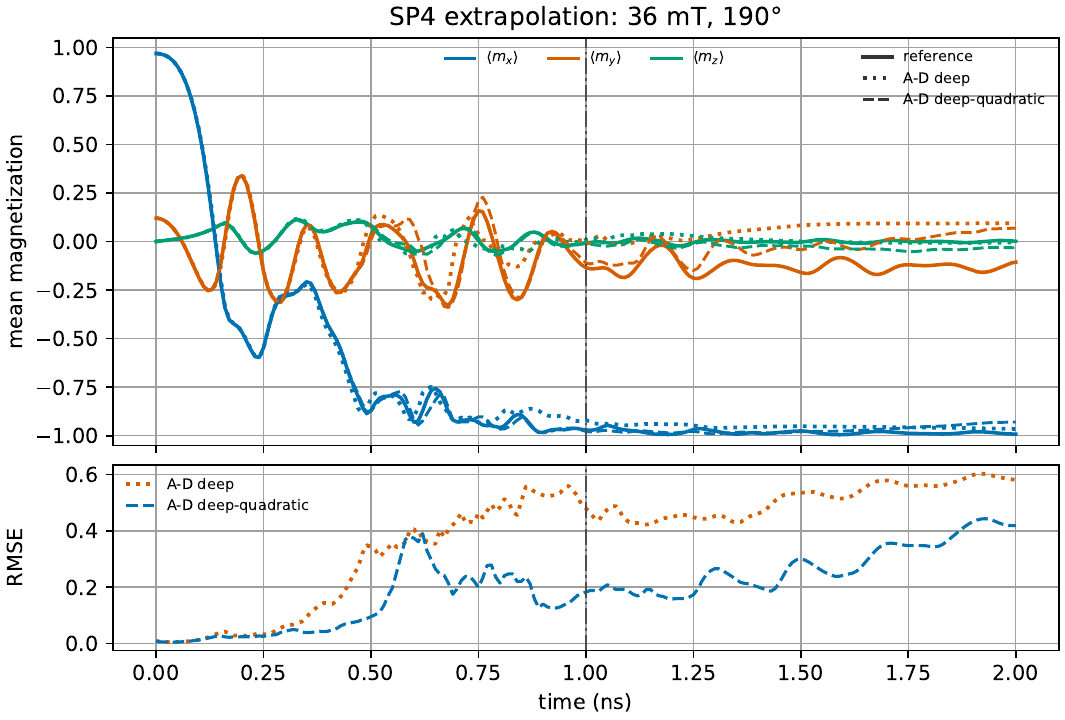}
\caption{As \Cref{fig:extrapolation-field1}, at \SI{36}{mT}, $190^\circ$.}\label{fig:extrapolation-field2}\end{figure}

\clearpage

\section{Limitations and prospective applications}
The experiments use only one grid and two distinct field directions. We do not perform any hyperparameter tuning and do not quantify optimization variability. The dissipation identity does not establish trajectory accuracy, physical-energy recovery, stability, or robustness outside the sampled family. 

Learning the solution trajectories over larger conditional parameter spaces requires exponentially more data. A promising idea from energy-based models is to employ a bounded replay buffer of positive supervised PDE transitions. An offline active reference-solver could be advanced by one output interval, with the resulting one-step transitions or short windows stored in the buffer. An instance would be replaced after reaching a predefined termination criterion. This stochastic approach could allow the training over large conditional spaces.
To test this, an natural extension of the training data in this work could include a second conditional parameter (field magnitude and field angle). Further, a spatial latent structure allows for arbitrary input dimension for the encoder. An interesting direction would be the training with different input shapes and testing spatial generalization of the model. 

Low-rank channel-operator parameterizations could drastically decrease the required parameter count for the latent vector field model. The presented operator spectra indicate that this low-rank structure is useful, but introducing rank restrictions requires a separate study.

In integrated sensing, a compact latent rollout could be useful if task-specific accuracy, latency, and robustness are demonstrated. When trajectories share an initial state, it only needs to be encoded once and only a terminal state could be decoded if intermediate fields are unnecessary. For observables, such as mean magnetization, a lightweight latent-to-observable head could replace the decoder completely. This could potentially result in a very lightweight model class.

\section{Conclusion}
We formulated a reduced order model coupling an autoencoder model to a conservative--dissipative energy-gradient ODE. Its continuous autonomous flow decreases the learned scalar latent energy while an antisymmetric operator allows for motion along the contour lines. We paired the vector field model with three different latent energy models, a deep, a quadratic, and a deep--quadratic model. These models were tested on two generated dataset for the NIST $\mu$MAG Standard Problem~4. One dataset for the field 1 case and one for the difficult field 2 case. The results give empirical evidence that the antisymmetric operator is a requirement to accurately learn a conditioned latent energy-gradient ODE for the dynamics. In paired post-hoc \SI{2}{ns} rollouts, for the field 1 case, the deep--quadratic model set a strong inductive bias and allowed for realistic generalization past the training horizon.\\

\textbf{Data.} The source code used to produce the results is publicly available on Zenodo \cite{ebm_2026_22281255}.

\newpage

\section*{Acknowledgment} 
This research was funded in whole or in part by the Austrian Science Fund (FWF) [10.55776/PAT7615923, 10.55776/P35413]. For the purpose of Open Access, the authors have applied a CC BY public copyright license to any Author Accepted Manuscript (AAM) version arising from this submission. The computations were partly achieved by using the Austrian Scientific Cluster (ASC) via the funded projects No. 71140, 71952 and 72862.


\end{document}